\newcommand{\QED}{\hfill $\Box$}
\newcommand{\Among}{\Constraint{Among}}
\newcommand{\Arc}[2]{#1~\Set{#2}}                            
\newcommand{\Automaton}{\Constraint{Automaton}}
\newcommand{\cAutomaton}{\Constraint{cAutomaton}}
\newcommand{\cAutomatonAtLeast}{\Constraint{cAutomatonAtLeast}}
\newcommand{\cAutomatonAtMost}{\Constraint{cAutomatonAtMost}}
\newcommand{\Cardinality}[1]{\lvert#1\rvert}
\newcommand{\Constraint}[1]{\textsc{#1}}                     
\newcommand{\CostRegular}{\Constraint{CostRegular}}
\newcommand{\DFA}{\mathcal{A}}
\newcommand{\Domain}[1]{\mathrm{dom}(#1)}                    
\newcommand{\EmptyString}{\epsilon}
\newcommand{\Eq}{$\Set{k \gets 0}$}
\newcommand{\Iff}{\Leftrightarrow}
\newcommand{\Inflexion}{\Constraint{Inflexion}}
\newcommand{\Initial}{q_0}
\newcommand{\NumberWord}{\Constraint{NumberWord}}
\newcommand{\Regular}{\Constraint{Regular}}
\newcommand{\Sequence}[1]{[#1]}
\newcommand{\Set}[1]{\{#1\}}                                 
\newcommand{\SetComp}[2]{\Set{#1 \mid #2}}
\newcommand{\Tuple}[1]{\langle#1\rangle}                     
\newcommand{\Cost}[1]{\mathcal{C}(#1)}                       
\newcommand{\Inc}{\mathit{inc}}
\newcommand{\Next}[2]{\TransitionFct(#1,#2)}                 
\newcommand{\NextN}[2]{\TransitionFct_\mathbb{N}(#1,#2)}     
\newcommand{\NextQ}[2]{\TransitionFct_\mathrm{Q}(#1,#2)}     
\newcommand{\Path}[3]{#1  \overset{#2}{\rightsquigarrow} #3}  
\newcommand{\Transition}[3]{#1 \overset{#2}{\rightarrow} #3}  
\newcommand{\TransitionFct}{\delta}
\newcommand{\PreMin}[1]{\underline{\mathrm{QCF}}(#1)}  
\newcommand{\PreMax}[1]{\overline{\mathrm{QCF}}(#1)}   
\newcommand{\SufMin}[1]{\underline{\mathrm{QCB}}(#1)}  
\newcommand{\SufMax}[1]{\overline{\mathrm{QCB}}(#1)}   
\newcommand{\Min}[1]{\underline{m}(#1)}                
\DeclareMathOperator*{\trimMin}{trimMin}
\newcommand{\TrimMin}[2]{\trimMin\limits_{\substack{#1}}(#2)}     
\title{Propagating Regular Counting Constraints}
\author{
  Nicolas Beldiceanu \inst{1} \and
  Pierre Flener \inst{2} \and \\
  Justin Pearson \inst{2} \and 
  Pascal Van Hentenryck \inst{3}
}
\institute{TASC team (CNRS/INRIA), Mines de Nantes, 44307 Nantes, France \\
  \url{Nicolas.Beldiceanu@mines-nantes.fr}
  \and
  Uppsala University, Dept of Information Technology,
  751 05 Uppsala, Sweden \\
  \url{Pierre.Flener@it.uu.se, Justin.Pearson@it.uu.se}
  \and
  Optimization Research Group, NICTA, and The University of Melbourne,
  Australia \\
  \url{pvh@nicta.com.au}
}
\begin{document}

\maketitle

\begin{abstract}
  Constraints over finite sequences of variables are ubiquitous in
  sequencing and timetabling.  Moreover, the wide variety of such
  constraints in practical applications led to general modelling
  techniques and generic propagation algorithms, often based on
  deterministic finite automata (DFA) and their extensions.  We
  consider counter-DFAs (cDFA), which provide concise models for
  regular counting constraints, that is constraints over the number of
  times a regular-language pattern occurs in a sequence.  We show how
  to enforce domain consistency in polynomial time for \emph{atmost}
  and \emph{atleast} regular counting constraints based on the
  frequent case of a cDFA with only accepting states and a single
  counter that can be incremented by transitions.  We also prove that
  the satisfaction of \emph{exact} regular counting constraints is
  NP-hard and indicate that an incomplete algorithm for \emph{exact}
  regular counting constraints is faster and provides more pruning
  than the existing propagator from~\cite{Beldiceanu:automata}.
  Regular counting constraints are closely related to the
  $\CostRegular$ constraint but contribute both a natural abstraction
  and some computational advantages.
\end{abstract}

\section{Introduction}

Constraints over finite sequences of variables arise in many
sequencing and timetabling applications, and the last decade has
witnessed significant research on how to model and propagate, in a
generic way, idiosyncratic constraints that are often featured in
these applications.  The resulting modelling techniques are often
based on formal languages and, in particular, deterministic finite
automata (DFA).  Indeed, DFAs are a convenient tool to model a wide
variety of constraints, and their associated propagation algorithms
can enforce domain consistency in polynomial
time~\cite{Beldiceanu:automata,Pesant:regular}.

This paper is concerned with the concept of counter-DFA (cDFA), an
extension of DFAs proposed in~\cite{Beldiceanu:automata}, and uses it
to model regular counting constraints, that is constraints on the
number of regular-language patterns occurring in a sequence of
variables.  cDFAs typically result in more concise and natural
encodings of regular counting constraints compared to DFAs, but it is
unknown if they admit efficient propagators enforcing domain
consistency.  This paper originated as an attempt to settle this
question.  We consider the subset of cDFAs satisfying two conditions:
(1)~all their states are accepting, and (2)~they manipulate a single
counter that can be incremented by transitions.  These conditions are
satisfied for many regular counting constraints and offer a good
compromise between expressiveness and efficiency.

Our main contribution is to show that, for such a counter automaton
$\DFA$, it is possible to enforce domain consistency efficiently on
\emph{atmost} and \emph{atleast} regular counting constraints.
Constraint $\cAutomatonAtMost(N,X,\DFA)$ holds if the counter of
$\DFA$ is at most $N$ after $\DFA$ has consumed sequence $X$.
Constraint $\cAutomatonAtLeast(N,X,\DFA)$ is defined similarly.  We
also prove the NP-hardness of satisfiability testing for constraint
$\cAutomaton(N,X,\DFA)$, which holds if the counter of $\DFA$ is
exactly $N$ after $\DFA$ has consumed $X$.  Compared to the
$\CostRegular$ constraint~\cite{CostRegular}, as generalised for the
\emph{Choco} solver~\cite{CHOCO}, our contribution is a propagator for
exact regular counting that uses asymptotically less space (for its
internal datastructures) and yet propagates more on the variables of
$X$.  Furthermore, our propagators for \emph{atmost} and
\emph{atleast} regular counting achieve domain consistency on the
counter variable $N$ (and~$X$) in the same asymptotic time as the
$\CostRegular$ propagator achieves only bounds consistency on $N$ (but
also domain consistency on~$X$).

The rest of the paper is organised as follows.
Section~\ref{section:problem} defines cDFAs, regular counting
constraints, and the class of cDFAs considered.
Section~\ref{section:propagator} gives the propagator, its complexity,
and its evaluation.  Section~\ref{section:conclusion} concludes the
paper and discusses related work.

\section{Background}
\label{section:problem}

\subsection{Deterministic Finite Counter Automata} 

Recall that a \emph{deterministic finite automaton} (DFA) is a tuple
$\Tuple{Q,\Sigma,\TransitionFct,\Initial,F}$, where $Q$ is the set of
states, $\Sigma$ is the alphabet, $\TransitionFct \colon Q \times
\Sigma \to Q$ is the transition function, $\Initial \in Q$ is the
start state, and $F \subseteq Q$ is the set of accepting states.

This paper considers a subclass of counter-DFAs in which all states
are accepting and only one counter is used.  The counter is
initialised to $0$ and increases by a given natural number at every
transition.  Such an automaton accepts every string and assigns a
value to its counter.
More formally, a \emph{counter-DFA} (cDFA) is here specified as a
tuple $\Tuple{Q,\Sigma,\TransitionFct,\Initial,F}$, where $Q$,
$\Sigma$, $\Initial$, and $F$ are as in a DFA except that $F=Q$ and
the DFA transition function $\TransitionFct$ is extended to the
signature $Q \times \Sigma \to Q \times \mathbb{N}$, so that
$\Next{q}{\ell} = \Tuple{r,\Inc}$ indicates that $r$ is \emph{the}
successor state of state $q$ upon reading alphabet symbol $\ell$ and
the counter must be incremented by $\Inc$.
We also define two projections of this extended transition function:
if $\Next{q}{\ell} = \Tuple{r,\Inc}$, then $\NextQ{q}{\ell} = r$ and
$\NextN{q}{\ell} = \Inc$.
Given $\Next{q}{\ell} = \Tuple{r,\Inc}$, we denote by
$\Cost{\Transition{q}{\ell}{r}}$ the counter increase $\Inc$ of
transition $\Transition{q}{\ell}{r}$ from state~$q$ to state~$r$ upon
consuming symbol~$\ell$.  Similarly, we denote by
$\Cost{\Path{q}{\sigma}{r}}$ the counter increase of a path
$\Path{q}{\sigma}{r}$ from state~$q$ to state~$r$ upon consuming a
(possibly empty) string~$\sigma$.

\begin{example}
  Consider the automaton $\mathcal{AAB}$ in Figure~\ref{fig:aab}.  It
  represents a cDFA with state set $Q = \Set{\epsilon, a,
    \mathit{aa}}$ and alphabet $\Sigma = \Set{\mathrm{a},
    \mathrm{b}}$.  The transition function $\TransitionFct$ is given
  by the labelled arcs between states, and the start state is
  $\Initial = \epsilon$ (indicated by an arc coming from no state; we
  often denote the start state by~$\EmptyString$, because it can be
  reached by consuming the empty string~$\EmptyString$).  Since the
  final states $F$ are all the states in $Q$, this automaton
  recognises every string over $\Set{\mathrm{a}, \mathrm{b}}$ and is
  thus by itself not very interesting.  However, the cDFA features a
  counter $k$ that is initialised to $0$ at the start state,
  incremented by $1$ on the transition from state $\mathit{aa}$ to
  state $\epsilon$ upon reading symbol `$\mathrm{b}$', and incremented
  by $0$ on all other transitions.  As a result, the final value of
  $k$ is the number of occurrences of the word ``$\mathrm{aab}$''
  within the string.  \QED
\end{example}

\begin{figure}[t]
  \centering
  \begin{tikzpicture}[->,>=stealth',shorten >=1pt,auto,node distance=20mm,semithick]
    \node[initial,accepting,initial text=\Eq,initial distance=5mm,state] (A) {$\epsilon$};
    \node[state,accepting] (B) [right of=A] {$a$};
    \node[state,accepting] (C) [right of=B] {$\mathit{aa}$};
    \path
    (A)
    edge[in=-150,out=-120,loop]	node[left]{b}			(A)
    edge			node{a}				(B)
    (B)
    edge[bend right]		node[above]{b}			(A)
    edge			node{a}				(C)
    (C)
    edge[in=-60,out=-30,loop]	node[right]{a}			(C)
    edge[bend left] 		node{\Arc{b}{$k \gets k+1$}}	(A);
  \end{tikzpicture}
  \caption{Counter-DFA $\mathcal{AAB}$ for the constraint
    $\NumberWord(N,X,\text{``aab''})$}
  \label{fig:aab}
\end{figure}
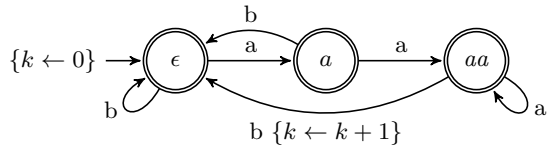

\subsection{Regular Counting Constraints}

A \emph{regular counting constraint} is defined as a constraint
that can be modelled by a cDFA.  The $\cAutomaton(N,X,\DFA)$
constraint holds if the value of variable~$N$, called the
\emph{counter variable}, is equal to the final value of the counter
after cDFA~$\DFA$ has consumed the values of the entire sequence $X$
of variables.  Consider the constraint $\NumberWord(N,X,w)$, which
holds if $N$ is the number of occurrences of the non-empty word $w$ in
the sequence $X$ of variables.  Constraint
$\NumberWord(N,X,\text{``aab''})$ can be modelled by the
$\cAutomaton(N,X,\mathcal{AAB})$ constraint with the automaton
$\mathcal{AAB}$ specified in Figure~\ref{fig:aab}.

\subsection{Signature Constraints}

A constraint on a sequence $X$ of variables can sometimes be modelled
with the help of a DFA or cDFA that operates not on $X$, but on a
sequence of \emph{signature variables} that functionally depend via
\emph{signature constraints} on a sliding window of variables within
$X$~\cite{Beldiceanu:automata}.

For example, the $\Among(N,X,\mathcal{V})$
constraint~\cite{Beldiceanu:globals} requires $N$ to be the number of
variables in the sequence $X$ that are assigned a value from the given
set~$\mathcal{V}$.  With signature constraints $x_i \in \mathcal{V}
\Iff s_i=1$ and $x_i \notin \mathcal{V} \Iff s_i=0$ (with $x_i \in
X$), we obtain a sequence of $\Cardinality{X}$ signature variables
$s_i$ that can be used in a cDFA that counts the number of occurrences
of value~$1$ in that sequence.  Rather than labelling the transitions
of such a cDFA with \emph{values} of the domain of the signature
variables (the set $\Set{0,1}$ here), we label them with the
corresponding \emph{conditions} of the signature constraints, as in
the cDFA $\mathcal{AMONG}$ given in Figure~\ref{fig:among}.  Note that
the choice of $\Among$ is purely pedagogical: we do not argue that
this is the best way to model and propagate this constraint.

\begin{figure}[t]
  \centering
  \begin{tikzpicture}[->,>=stealth',shorten >=1pt,auto,node distance=2cm,semithick]
    \node[initial,accepting,initial text=\Eq,initial distance=5mm,state] (A) {$i$};
    \path
    (A)
    edge[in=40,out=10,loop]	node[right]{$x_i \notin \mathcal{V}$}	  		(A)
    edge[in=-40,out=-10,loop]	node[right]{\Arc{$x_i \in \mathcal{V}$}{$k \gets k+1$}} (A);
  \end{tikzpicture}
  \caption{Counter-DFA $\mathcal{AMONG}$ for the constraint
    $\Among(N,X,\mathcal{V})$}
  \label{fig:among}
\end{figure}
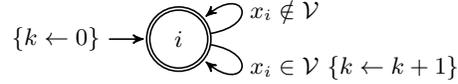

If each signature variable depends on a sliding window of size $1$
within $X$ (as for $\Among$), then the signature constraints are
\emph{unary}.  Our results also apply to cDFAs with unary signature
constraints because a network of a $\cAutomatonAtMost$ constraint and
unary signature constraints is Berge-acyclic.

\section{The Propagator}
\label{section:propagator}

\subsection{Feasibility Test and Domain Consistency Filtering}

Our propagator is defined in terms of the following concepts, which
assume a sequence $x_1,\dots,x_n$ of variables:
\begin{itemize}
\item Define $\PreMin{i}$ (respectively $\PreMax{i}$) to be the set of
  pairs $\Tuple{q,c}$ where $c$ is the minimum (respectively maximum)
  counter increase (or value) after the automaton consumes string
  $\sigma$ from state $\Initial$ to reach state $q$, for all strings
  $\sigma = \sigma_1\cdots\sigma_i$ where $i \in [0,n]$ and $\sigma_j
  \in \Domain{x_j}$ for each $j \in [1,i]$.
\item Define $\SufMin{i}$ (respectively $\SufMax{i}$) to be the set of
  pairs $\Tuple{q,c}$ where $c$ is the minimum (respectively maximum)
  counter increase after the automaton consumes string $\sigma$ from
  state $q$ to reach a state appearing in $\PreMin{n}$ (respectively
  $\PreMax{n}$), for all strings $\sigma = \sigma_i\cdots\sigma_n$
  where $i \in [1,n+1]$ and $\sigma_j \in \Domain{x_j}$ for each $j
  \in [i,n]$.
\end{itemize}

\begin{example}
  By illustrating one representative of these four quantities, we show
  that we have to maintain the maximum counter value \emph{for every
    state} reachable from $\Initial$ in $i$ steps, rather than just
  maintaining the overall maximum counter value and the set of states
  reachable from $\Initial$ in $i$ steps.  Consider the automaton
  $\mathcal{RST}$ in Figure~\ref{fig:rev_automaton}, where $\Initial$
  is $\epsilon$.  In a sequence of $n=6$ variables $x_1,\dots,x_6$
  that must be assigned value `$\mathrm{r}$' or `$\mathrm{t}$', we
  have:
  \begin{align*}
    \PreMax{0} &= \Set{
      \Tuple{\epsilon,0}
    } \\
    \PreMax{1} &= \Set{
      \Tuple{\epsilon,0},
      \Tuple{r,1}
    } \\
    \PreMax{2} &= \Set{
      \Tuple{\epsilon,1},
      \Tuple{r,1},
      \Tuple{\mathit{rr},1}
    } \\
    \PreMax{3} &= \Set{
      \Tuple{\epsilon,1},
      \Tuple{r,2},
      \Tuple{\mathit{rr},1},
      \Tuple{\mathit{rrt},1}
    } \\
    \PreMax{4} &= \Set{
      \Tuple{\epsilon,2},
      \Tuple{r,2},
      \Tuple{\mathit{rr},2},
      \Tuple{\mathit{rrt},1},
      \Tuple{\mathit{rrtr},3}
    } \\
    \PreMax{5} &= \Set{
      \Tuple{\epsilon,2},
      \Tuple{r,3},
      \Tuple{\mathit{rr},3},
      \Tuple{\mathit{rrt},2},
      \Tuple{\mathit{rrtr},3}
    } \\
    \PreMax{6} &= \Set{
      \Tuple{\epsilon,3},
      \Tuple{r,3},
      \Tuple{\mathit{rr},3},
      \Tuple{\mathit{rrt},3},
      \Tuple{\mathit{rrtr},4}
    }
  \end{align*}
  Indeed, $\Tuple{\mathit{rrtr},4} \in \PreMax{6}$ because
  $\Tuple{\mathit{rrt},2} \in \PreMax{5}$ and there is a transition in
  $\DFA$ from $\mathit{rrt}$ to $\mathit{rrtr}$ on symbol
  `$\mathrm{r}$' with a counter increase of $2$, even though three
  states have a higher counter value (namely $3$) than $\mathit{rrt}$
  in $\PreMax{5}$.  \QED
\end{example}

\begin{figure}[t]
  \centering
  \begin{tikzpicture}[->,>=stealth',shorten >=1pt,auto,node distance=27mm,semithick]
    \node[initial,initial text=\Eq,initial distance=5mm,accepting,state] (e) {$\epsilon$};
    \node[accepting,state] (r)    [above of=e]   {$r$};
    \node[accepting,state] (rr)   at ($(r)+(1.8cm,0)$)   {$\mathit{rr}$};
    \node[accepting,state] (rrt)  at ($(rr)+(3.5cm,0)$)  {$\mathit{rrt}$};
    \node[accepting,state] (rrs)  [below of=rrt] {$\mathit{rrs}$};
    \node[accepting,state] (rrtr) [below of=rr] {$\mathit{rrtr}$};
    \path
    (e)    edge[in=240,out=210,loop]	node[left]{s, t}	(e)
           edge[bend left] node[left=1,above,sloped]{\Arc{r}{$k \gets k+1$}}	(r)
    (r)    edge 		node{s, t}			(e)
    	   edge			node{r}				(rr)
    (rr)   edge[in=150,out=120,loop]	node[left]{r}		(rr)
    (rr)   edge			node{t}				(rrt)
    (rr.290)      edge[bend right]	node[below]{s}			(rrs.150)
    (rrt)  edge[in=150,out=120,loop]	node[left]{t}		(rrt)
    (rrt)  edge[bend left]	node{s}				(rrs)
    (rrt.south west)  edge[bend angle=20,bend right]	node[right]{\Arc{r}{$k \gets k+2$}}	(rrtr)
    (rrs)  edge 		node {\Arc{r}{$k \gets k+2$}\hspace*{0.3cm}} (rrtr)
           edge[bend left]	node{s, t}			(e)
    (rrtr) edge[loop left]	node{t}				(rrtr)
           edge			node{r}				(rr)
           edge			node[right]{s}			(r);
  \end{tikzpicture}
  \caption{Counter-DFA $\mathcal{RST}$ with non-unit counter increases}
  \label{fig:rev_automaton}
\end{figure}
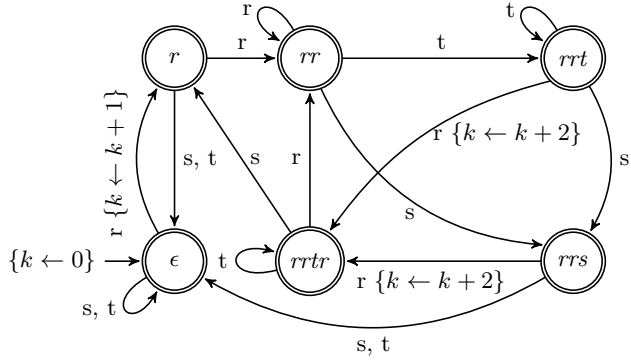

To compute $\PreMin{i}$ and $\SufMin{i}$, we need an operation that
takes a set of state-and-integer pairs and keeps only the pairs
$\Tuple{q,c}$ where there is no pair $\Tuple{q,c'}$ with $c' < c$.
Formally, $\TrimMin{}{S} = \SetComp{\Tuple{q,c} \in S}{\nexists
  \Tuple{q,c'} \in S : c' < c}$.  For brevity, we use
$\TrimMin{\phi(q,c)}{\Tuple{q,c}}$ to denote $\TrimMin{}{\left\{
    \Tuple{q,c} \mid \phi(q,c) \right\}}$, for any condition $\phi$.
We inductively define $\PreMin{i}$ and $\SufMin{i}$ as follows:
\[
\begin{array}{l}
  \PreMin{i} =
  \begin{cases}
    \Set{\Tuple{\Initial,0}}
    & \text{if~~} i=0 \\
    \TrimMin
    {\Tuple{q,c} \in \PreMin{i-1} \\ \ell \in \Domain{x_i}}
    {\Tuple{\NextQ{q}{\ell},~ c + \NextN{q}{\ell}}}
    & \text{if~~} i \in [1,n]
  \end{cases}
\end{array}
\]
\[
\begin{array}{l}
  \SufMin{i} =
  \begin{cases}
    \SetComp{\Tuple{q,0}}{\exists c \in \mathbb{N} : \Tuple{q,c} \in \PreMin{n}}
    & \text{if~~} i=n+1 \\
    \TrimMin
    {\Tuple{q',c'} \in \SufMin{i+1} \\
      \ell \in \Domain{x_i} \\
      \Next{q}{\ell} = \Tuple{q',\Inc}}
    {\Tuple{q,~ c' + \Inc}}
    & \text{if~~} i \in [1,n]
  \end{cases}
\end{array}
\]
We prove that the inductively computed quantities correspond to the
definitions of $\PreMin{i}$ and $\SufMin{i}$.  First consider
$\PreMin{i}$.  The base case $\PreMin{0}$ follows from the
initialisation to zero of the counter.  By induction, suppose the set
$\PreMin{i-1}$ is correct.  Before applying $\trimMin$, the set
contains all pairs obtained upon reading the symbol $\ell$ starting
from some pair $\Tuple{q,c}$ in $\PreMin{i-1}$, where $c$ is the
minimum counter value for $q$ over sequences of length $i-1$.  The
$\trimMin$ operation then filters out all the pairs $\Tuple{q',c'}$
with non-minimum counter value for $q'$.  The correctness proof for
$\SufMin{i}$ is similar.

We define the $\cAutomatonAtMost(N,X,\DFA)$ propagator.  The
propagator for $\cAutomatonAtLeast$ is similar.  The following theorem
gives a feasibility test.

\begin{theorem}\label{prop:feasibility}
  A $\cAutomatonAtMost(N,\Sequence{x_1,\dots,x_n},\DFA)$ constraint
  has a solution iff the minimum value of the counter of $\DFA$ after
  consuming the entire sequence is at most the maximum of the domain
  of $N$:
  \[
    \min_{\Tuple{q,c} \in \PreMin{n}} c
    \leq
    \max(\Domain{N})
  \]
\end{theorem}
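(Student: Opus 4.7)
The plan is to prove both directions of the biconditional directly from the semantics of $\cAutomatonAtMost$ and the already-established correctness of $\PreMin{n}$. Recall that an assignment $(\sigma_1,\dots,\sigma_n,v)$ with $\sigma_i \in \Domain{x_i}$ and $v \in \Domain{N}$ satisfies $\cAutomatonAtMost(N,\Sequence{x_1,\dots,x_n},\DFA)$ iff the counter value of $\DFA$ obtained after consuming $\sigma_1\cdots\sigma_n$ from $\Initial$ is at most $v$. So the constraint is satisfiable iff some admissible string has a final counter value that is at most $\max(\Domain{N})$; equivalently, iff the \emph{minimum} attainable final counter value over all admissible strings is at most $\max(\Domain{N})$. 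This reformulation is essentially the inequality in the theorem, which is what makes the proof short.

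For the forward ($\Rightarrow$) direction I would fix a witnessing solution, let $c$ be its final counter value and $q$ its final state, and invoke the correctness of $\PreMin{n}$ to obtain a pair $\Tuple{q,c'} \in \PreMin{n}$ with $c' \leq c$ (because $\PreMin{n}$ keeps, for every reachable state, the minimum counter value achievable along an admissible string of length $n$). The chain $\min_{\Tuple{q,c}\in \PreMin{n}} c \leq c' \leq c \leq v \leq \max(\Domain{N})$ then closes the direction.

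For the backward ($\Leftarrow$) direction I would pick a pair $\Tuple{q^*,c^*} \in \PreMin{n}$ achieving the stated minimum and, by unfolding the inductive definition of $\PreMin{i}$ backwards from $i=n$ to $i=0$, extract a concrete string $\sigma_1\cdots\sigma_n$ with $\sigma_i \in \Domain{x_i}$ that reaches $q^*$ from $\Initial$ with final counter value exactly $c^*$. Pairing this string with the value $N := \max(\Domain{N})$ gives an assignment that satisfies both $N \in \Domain{N}$ and $c^* \leq N$, hence satisfies $\cAutomatonAtMost$.

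The only part that requires any care is the witness extraction used in the backward direction. Each application of $\trimMin$ in the inductive definition of $\PreMin{i}$ discards pairs, but only when they are dominated at the same state, so any surviving pair $\Tuple{q,c} \in \PreMin{i}$ corresponds to at least one admissible string of length $i$ that realises it: if $c = c'' + \NextN{q''}{\sigma_i}$ via a surviving predecessor $\Tuple{q'',c''} \in \PreMin{i-1}$ and some $\sigma_i \in \Domain{x_i}$, a realising string for $\Tuple{q'',c''}$ extended by $\sigma_i$ realises $\Tuple{q,c}$. This short auxiliary induction on $i$ is essentially what the paragraph preceding the theorem already establishes, so once it is cited the theorem itself follows with no further work.
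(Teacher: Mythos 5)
Your proposal is correct and follows essentially the same route as the paper: both directions rest on the fact (established in the paragraph preceding the theorem) that $\PreMin{n}$ records, for each reachable state, the minimum counter value realised by some admissible string of length $n$, so the minimum over $\PreMin{n}$ is exactly the minimum attainable final counter value, and a realising string serves as the witness. You merely spell out the two directions and the witness extraction more explicitly than the paper's one-paragraph ``iff'' argument.
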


\begin{proof}
  Suppose $\underline{c}$ is the minimum counter value such that
  $\Tuple{q,\underline{c}} \in \PreMin{n}$ for some state $q$.  By the
  definition of $\PreMin{n}$, there is some sequence $\sigma =
  \sigma_1\cdots\sigma_n$ where for all $1 \leq j \leq n$ the symbol
  $\sigma_j$ belongs to $\Domain{x_j}$ such that
  $\Cost{\Path{\Initial}{\sigma}{q}} = \underline{c}$.  Because each
  $\sigma_j$ belongs to the domain of the corresponding variable, we
  have that $\sigma$ is a solution to $\cAutomatonAtMost$ iff
  $\underline{c} \leq \max(\Domain{N})$.  \QED
\end{proof}

We now show how to achieve domain consistency on $\cAutomatonAtMost$.

\begin{theorem}\label{prop:filtering}
  For a $\cAutomatonAtMost(N,\Sequence{x_1,\dots,x_n},\DFA)$
  constraint, define the minimum value of the counter of $\DFA$ for
  variable $x_i$ to take value $\ell$:
  \[
    \Min{i,\ell} =
      \displaystyle
      \min_{\substack{
          \Tuple{q,c} \in \PreMin{i-1} \\
          q' = \NextQ{q}{\ell} \\
          \Tuple{q',c'} \in \SufMin{i+1}}}
      (c + \NextN{q}{\ell} + c')
  \]
  \begin{enumerate}
  \item A value $\ell$ in $\Domain{x_i}$ $(\text{with~} i \in [1,n])$
    appears in a solution iff the minimum value of the counter is at
    most the maximum of the domain of $N$:
    \begin{equation*}
      \label{prop:xi}
      \Min{i,\ell} \leq \max(\Domain{N})
    \end{equation*}
  \item A value in $\Domain{N}$ appears in a solution iff it is at
    least the minimum counter value given in
    Theorem~\ref{prop:feasibility}.
  \end{enumerate}
\end{theorem}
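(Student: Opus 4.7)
The plan is to reduce each part to a statement about the minimum counter value over admissible complete strings, and then invoke the \emph{atmost} semantics together with Theorem~\ref{prop:feasibility}. Throughout I rely on the fact, already established in the preceding induction, that $\Tuple{q,c} \in \PreMin{i-1}$ records the minimum cost of any valid prefix of length $i-1$ ending at state $q$, and symmetrically for $\SufMin{i+1}$.

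For part~1 the key intermediate lemma is that $\Min{i,\ell}$ equals the minimum counter value over all admissible complete strings $\sigma_1\cdots\sigma_n$ with $\sigma_i = \ell$. To prove it I would split such a string at position~$i$: the prefix drives $\DFA$ from $\Initial$ to some $q$ with cost $c$; the transition on $\ell$ advances to $q' = \NextQ{q}{\ell}$ with increment $\NextN{q}{\ell}$; and the suffix drives $\DFA$ from $q'$ to some state reachable at step~$n$ with cost $c'$. Costs are additive, and the three segments interact only by meeting at $q$ and $q'$, so minimising the total reduces to minimising $c$ and $c'$ independently for each admissible pair $\Tuple{q,q'}$, which is exactly the expression defining $\Min{i,\ell}$. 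The forward direction of the stated iff is then immediate: any solution with $x_i = \ell$ exhibits a witness string of cost at least $\Min{i,\ell}$ yet at most $\max(\Domain{N})$. The backward direction takes a minimiser realising $\Min{i,\ell}$ and pairs it with $N := \max(\Domain{N})$ to form a solution.

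Part~2 is essentially a one-line corollary of Theorem~\ref{prop:feasibility}. By the \emph{atmost} semantics, a value $v \in \Domain{N}$ appears in a solution iff there is an admissible string whose counter value is at most $v$, which holds iff $v \geq \min_{\Tuple{q,c} \in \PreMin{n}} c$, since that minimum is achievable and nothing smaller is.

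The main obstacle is justifying the state-wise $\trimMin$ inside the intermediate lemma of part~1: several distinct prefix states can transition to the same $q'$ on $\ell$, so one must verify that retaining only the minimum-cost pair per prefix state (rather than per transition-induced state) does not discard a globally better combination. The resolution is that the objective $c + \NextN{q}{\ell} + c'$ decomposes additively; for fixed $q$ the optimal $c$ is exactly what $\PreMin{i-1}$ preserves, and for fixed $q'$ the optimal $c'$ is exactly what $\SufMin{i+1}$ preserves, so the overall minimum over all admissible triples is attained on the surviving pairs.
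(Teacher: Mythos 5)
Your proposal is correct and follows essentially the same route as the paper's proof: decompose a complete admissible string at position $i$, use additivity of the counter increments together with the minimality recorded in $\PreMin{i-1}$ and $\SufMin{i+1}$ to identify $\Min{i,\ell}$ with the minimum counter value over admissible strings having $\sigma_i=\ell$, and derive part~2 directly from Theorem~\ref{prop:feasibility}. Your explicit intermediate lemma and the discussion of why the state-wise $\trimMin$ loses nothing merely spell out what the paper leaves implicit in its ``(Only if)'' step.
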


\begin{proof}
  We start with the first claim.
  (If) We show that any $\ell \in \Domain{x_i}$ with $\Min{i,\ell}
  \leq \max(\Domain{N})$ participates in a solution.  Suppose
  $\Min{i,\ell}$ equals $\underline{c} + \NextN{q}{\ell} +
  \underline{c}'$ for some $\Tuple{q,\underline{c}} \in \PreMin{i-1}$
  and some $\Tuple{q',\underline{c}'} \in \SufMin{i+1}$, with $q' =
  \NextQ{q}{\ell}$.  Then there exist two strings $\sigma =
  \sigma_1\cdots\sigma_{i-1}$ and $\tau = \sigma_{i+1}\cdots\sigma_n$
  and some state $q_n$ such that
  \[
    \Cost{\Path{\Initial}{\sigma}{q}} = \underline{c}
  \]
  and
  \[
    \Cost{\Path{q'}{\tau}{q_n}} = \underline{c}'
  \]
  with $\sigma_j \in \Domain{x_j}$ for all $j \in [1,n]$.  Note that
  the length of $\sigma\ell\tau$ is $n$.  We have:
  \begin{equation*}
    \begin{split}
      \Cost{\Path{\Initial}{\sigma\ell\tau}{q_n}} & = 
      \Cost{\Path{\Initial}{\sigma}{q}}
      + \NextN{q}{\ell}
      + \Cost{\Path{q'}{\tau}{q_n}} \\
      & = \underline{c} + \NextN{q}{\ell} + \underline{c}' \\
      & = \Min{i,\ell} \leq \max(\Domain{N}).
    \end{split}
  \end{equation*}
  Hence the assignment corresponding to $\sigma\ell\tau$ satisfies the
  domains and the constraint, so $\ell \in \Domain{x_i}$ participates
  in a solution.
  (Only~if) If $\ell \in \Domain{x_i}$ participates in a solution,
  then the counter of that solution is at least $\Min{i,\ell}$ and at
  most $\max(\Domain{N})$, hence $\Min{i,\ell} \leq \max(\Domain{N})$.

  The second claim follows from Theorem~\ref{prop:feasibility}.
  Indeed, let $\underline{c} = \min_{\Tuple{q,c} \in \PreMin{n}} c$ So
  there exists a sequence $\sigma = \sigma_1\cdots\sigma_n$ with each
  $\sigma_j \in \Domain{x_j}$ such that
  $\Cost{\Path{\Initial}{\sigma}{q}} = \underline{c}$.  Further, for
  any $\sigma' = \sigma'_1\cdots\sigma'_n$ with each $\sigma'_j \in
  \Domain{x_j}$ for all $j \in [1,n]$, we have $\underline{c} \leq
  \Cost{\Path{\Initial}{\sigma'}{q'_n}}$ for some state $q'_n$.
  So, by Theorem~\ref{prop:feasibility}, we need to prove that $v \in
  \Domain{N}$ participates in a solution iff $\underline{c} \leq v$.
  (Only if) If $v \in \Domain{N}$ participates in a solution, then
  there exists a sequence $\sigma' = \sigma'_1\cdots\sigma'_n$ such
  that each $\sigma'_j \in \Domain{x_j}$ and
  $\Cost{\Path{\Initial}{\sigma'}{q'_n}} \leq v$.  Since
  $\underline{c} \leq \Cost{\Path{\Initial}{\sigma'}{q'_n}}$, we have
  $\underline{c} \leq v$.
  (If) If $\underline{c} \leq v$, then the sequence $\sigma$ above
  necessarily also forms a solution with $N=v$.  \QED
\end{proof}

A propagator is obtained by directly implementing the expressions and
conditions of Theorems~\ref{prop:feasibility}
and~\ref{prop:filtering}.  It is idempotent.

\subsection{Complexity}

The complexity of a non-incremental implementation of the propagator
is established as follows.  Recall that we consider sequences of $n$
variables $x_i$, each with at most the automaton alphabet $\Sigma$ as
domain.  Let the automaton have $\Cardinality{Q}$ states.
Each set $\PreMin{i}$ has $O(\Cardinality{Q})$ elements and takes
$O(\Cardinality{\Sigma} \cdot \Cardinality{Q})$ time to construct and
trim (assuming it is implemented as a counter-value array indexed by
$Q$, with all cells initialised to $+\infty$).  There are $n+1$ such
sets, hence the entire $\PreMin{\cdot}$ vector takes $O(n \cdot
\Cardinality{\Sigma} \cdot \Cardinality{Q})$ time and $\Theta(n \cdot
\Cardinality{Q})$ space.
Similarly, the entire $\SufMin{\cdot}$ vector takes $O(n \cdot
\Cardinality{\Sigma} \cdot \Cardinality{Q})$ time and $\Theta(n \cdot
\Cardinality{Q})$ space.
Each value $\Min{i,\ell}$ takes $O(\Cardinality{Q})$ time to
construct, since at most $\Cardinality{Q}$ pairs $\Tuple{q,c}$ of
$\PreMin{i-1}$ are iterated over and the corresponding pair
$\Tuple{q',c'}$ is unique and can be retrieved in constant time (under
the assumed data structure).  There are $n \cdot \Cardinality{\Sigma}$
such values, hence the entire $\Min{\cdot,\cdot}$ matrix takes $O(n
\cdot \Cardinality{\Sigma} \cdot \Cardinality{Q})$ time and $\Theta(n
\cdot \Cardinality{\Sigma})$ space.
Each test of a domain value takes constant time, hence $\Theta(n+1)$
time in total for the $n$ variables $x_i$ and the counter variable
$N$.
In total, such an implementation takes $O(n \cdot \Cardinality{\Sigma}
\cdot \Cardinality{Q})$ time, and $\Theta(n \cdot (\Cardinality{Q} +
\Cardinality{\Sigma}))$ space.

\subsection{The Exact Regular Counting Constraint}

Not surprisingly, decomposing $\cAutomaton(N,X,\DFA)$ into the
conjunction of $\cAutomatonAtMost(N,X,\DFA)$ and
$\cAutomatonAtLeast(N,X,\DFA)$ does not yield domain consistency at
the fixpoint of their propagators: for the cDFA $\mathcal{B}$ in
Figure~\ref{fig:noDCexact} and the constraint
$\cAutomaton(N,\Sequence{2,x,2},\mathcal{B})$, with $N \in
\Set{0,1,2}$ and $x \in \Set{1,2}$, it misses the inference of $N \neq
1$.  Worse, achieving domain consistency on exact regular counting is
actually NP-hard:

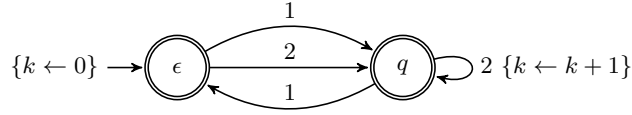
\begin{figure}[t]
  \centering
  \begin{tikzpicture}[->,>=stealth',shorten >=1pt,auto,node distance=30mm,semithick]
    \node[initial,accepting,initial text=\Eq,initial distance=5mm,state] (A) {$\epsilon$};
    \node[state,accepting] (B) [right of=A] {$q$};
    \path
    (A)
    edge[bend left]	node{1}					(B)
    edge		node{2}					(B)
    (B)
    edge[bend left]	node[above]{1}				(A)
    edge[loop right]	node[right]{\Arc{2}{$k \gets k+1$}}	(B);
  \end{tikzpicture}
  \caption{Counter-DFA $\mathcal{B}$, where domain consistency is not
    achieved for exact regular counting}
  \label{fig:noDCexact}
\end{figure}

\begin{theorem}\label{thm:exactDC:NPhard}
  The feasibility of $\cAutomaton$ constraints is NP-hard.
\end{theorem}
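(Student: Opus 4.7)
The plan is to reduce from \textsc{Subset Sum}: given positive integers $a_1,\dots,a_n$ and a target $T$, decide whether some $I \subseteq [1,n]$ satisfies $\sum_{i \in I} a_i = T$. This problem is NP-complete, and a polynomial reduction to $\cAutomaton$ feasibility will establish NP-hardness. I want the reduction to produce a cDFA satisfying the paper's restrictions (all states accepting, single counter incrementable on transitions), so that the hardness result applies even to the class treated in the rest of the paper.

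The construction is a ``ruler'' cDFA $\DFA$ over the alphabet $\Sigma = \Set{0,1}$. Its states are $Q = \Set{q_0,q_1,\dots,q_n}$ (all accepting), its start state is $q_0$, and its transitions are $\Next{q_{i-1}}{0} = \Tuple{q_i,0}$ and $\Next{q_{i-1}}{1} = \Tuple{q_i,a_i}$ for each $i \in [1,n]$, together with self-loops $\Next{q_n}{\ell} = \Tuple{q_n,0}$ for $\ell \in \Sigma$ to make $\TransitionFct$ total. I then form the constraint $\cAutomaton(N,\Sequence{x_1,\dots,x_n},\DFA)$ with $\Domain{x_i} = \Set{0,1}$ for each $i$ and $\Domain{N} = \Set{T}$.

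Correctness is then a single observation: for any assignment $\sigma = \sigma_1\cdots\sigma_n \in \Set{0,1}^n$, the path in $\DFA$ induced by $\sigma$ is forced to end in state $q_n$ with $\Cost{\Path{q_0}{\sigma}{q_n}} = \sum_{i : \sigma_i = 1} a_i$. Hence the constructed $\cAutomaton$ constraint has a solution iff some subset of $\Set{a_1,\dots,a_n}$ sums to $T$, i.e., iff the original \textsc{Subset Sum} instance is a yes-instance. The reduction is polynomial, since each $a_i$ appears verbatim as a transition increment in $\DFA$ and is encoded in binary exactly as in the \textsc{Subset Sum} input.

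There is no real obstacle; the only things to verify carefully are the side conditions of the paper's cDFA class. All states of $\DFA$ are accepting by construction, there is a single counter that is only incremented (by $0$ or $a_i \in \mathbb{N}$) on transitions, and $\TransitionFct$ is total thanks to the never-used self-loops at $q_n$ (since $\Cardinality{X} = n$). Thus the reduction lives entirely inside the class the paper actually considers, which is the point worth emphasising in the write-up.
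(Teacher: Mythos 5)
Your proof is correct and takes essentially the same route as the paper's: a polynomial reduction from \textsc{Subset-Sum} in which the counter accumulates the selected $a_i$'s and $\Domain{N}$ is pinned to the target. The only difference is the gadget --- the paper uses a one-state cDFA with alphabet $\Set{a_1,\dots,a_k,0}$ and variable domains $\Domain{x_i}=\Set{0,a_i}$, whereas you unroll the selection into a chain of $n+1$ states over a binary alphabet; both constructions are valid and stay within the paper's cDFA class.
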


\begin{proof}
  By reduction from Subset-Sum.  Consider an instance
  $\Tuple{\Set{a_1,\dots,a_k},s}$ of Subset-Sum, which holds if there
  is a subset $A \subseteq \Set{a_1,\dots,a_k}$ such that $\sum_{v \in
    A} v = s$.  Construct a cDFA $\DFA$ with one state and alphabet
  $\Sigma = \Set{a_1,\dots,a_k,0}$.  A transition labelled by $a_i$
  increases the counter by $a_i$, and the transition labelled by $0$
  does not increase the counter.  Build a sequence of variables $X =
  \Tuple{x_1,\dots,x_k}$ such that $\Domain{x_i} = \Set{0,a_i}$ and a
  variable $N$ such that $\Domain{N} = \Set{s}$.  Such a reduction can
  be done in polynomial time.  Subset-Sum holds iff
  $\cAutomaton(N,X,\DFA)$ holds.  \QED
\end{proof}

The propagator for $\cAutomatonAtMost$ can be generalised into an
incomplete propagator for $\cAutomaton$.  A value $\ell$ is removed
from the domain of variable $x_i$ if the following condition holds:
\begin{gather*}
  \forall \Tuple{q,\underline{c}} \in \PreMin{i-1}: \\
  \displaystyle\bigwedge_{\substack{\Tuple{q,\overline{c}} \in \PreMax{i-1} \\
      q' = \NextQ{q}{\ell} \\
      \Tuple{q',\underline{c'}} \in \SufMin{i+1} \\
      \Tuple{q',\overline{c'}} \in \SufMax{i+1}
    }}
  \left[
    \hspace*{2pt}
    \underline{c} + \NextN{q}{\ell} + \underline{c'},~
    \overline{c} + \NextN{q}{\ell} + \overline{c'}
    \hspace*{2pt}
  \right]
  \cap \Domain{N} = \emptyset
\end{gather*}
This propagator has the same space complexity as $\cAutomatonAtMost$,
but it may need more than one run to achieve idempotency.  Indeed, it
differs from the previous propagator in that lower and upper bounds
have to be calculated for \emph{each} state in $\PreMin{i-1}$, and it
is possible that some states will give different bounds.  Hence the
first run of the propagator might not reach idempotency.  The
propagator is strictly stronger than computing the fixpoint of
$\cAutomatonAtMost$ and $\cAutomatonAtLeast$, because the intersection
test with respect to $\Domain{N}$ is strictly stronger than the
conjunction of the two comparisons on the \emph{atmost} and
\emph{atleast} sides: for the cDFA in Figure~\ref{fig:noDCexact} and
the $\cAutomaton(1,\Sequence{2,x,1,y,z},\mathcal{B})$ constraint, with
$x,y,z \in \Set{1,2}$, the $\cAutomaton$ propagator infers $z \neq 2$,
whereas the decomposition misses this inference.  The $\cAutomaton$
propagator is also incomplete: the counter-example before
Theorem~\ref{thm:exactDC:NPhard} for the decomposition also applies to
it.

\subsection{Evaluation}

We implemented in \emph{SICStus Prolog} version~4.2.1~\cite{SICStus}
the described propagators for $\cAutomatonAtMost$,
$\cAutomatonAtLeast$, and $\cAutomaton$.  As a sanity check, we tested
them extensively as follows.

We generated random cDFAs of up to five states (note that all $34$
counter automata of the \emph{Global Constraint
  Catalogue}~\cite{GC-catalogue} have at most five states) using the
random DFA generator~\cite{Almeida:randomDFA} of \emph{FAdo}
(version~0.9.6) and doing a counter increment by $1$ on each arc with
a probability of $20\%$.  For each random cDFA, we generated random
instances, with random lengths (up to $n=10$) of
$X=\Sequence{x_1,\dots,x_n}$ and random initial domains of the counter
variable $N$ (one value, two values, and intervals of length $2$ or
$3$) and the signature variables $s_i$ (intervals of any length, and
sets with holes).

The results, upon many millions of random instances, are that no
counterexample to the domain consistency of $\cAutomatonAtMost$ has
been generated (giving credence to Theorems~\ref{prop:feasibility}
and~\ref{prop:filtering}), and that no pruning by the three
propagators of actually supported values was observed.  Also, our
propagator for $\cAutomaton$ never propagates less but often more, to
the point of detecting more failures, than the built-in
$\Automaton$~\cite{Beldiceanu:automata} of SICStus Prolog, and that it
is already often up to twice faster than the latter, even though it is
currently na\"ively implemented in Prolog while the built-in works by
decomposition into a conjunction of other global constraints, all of
which are very carefully implemented in~\emph{C}.

Table~\ref{tab:eval} gives the cumulative runtimes (under Mac OS
X~10.7.5 on a 2.8~GHz Intel Core 2 Duo with a 4~GB RAM), the numbers
of detected failures, and (when both propagators succeed) the numbers
of pruned values for random instances of some constraints, the
four-state cDFA for the $\NumberWord(N,X,\text{``toto''})$ constraint
being unnecessary here.

\begin{table}[t]
  \begin{center}
    \begin{tabular}[c]{lr|rr|rr|rr}
      &
      & \multicolumn{2}{c|}{seconds}
      & \multicolumn{2}{c|}{failures}
      & \multicolumn{2}{c}{prunings} \\
      \cline{3-8}
      Constraint
      & \#inst
      & \Constraint{cAuto} & \Constraint{Auto}
      & \Constraint{cAuto} & \Constraint{Auto}
      & \Constraint{cAuto} & \Constraint{Auto} \\
      \hline
      $\Among(N,X,\mathcal{V})$
      &  4,400 & 0.8 & 1.8 & 2,241 & 2,241 &  3,749 & 3,749 \\
      $\NumberWord(N,X,\text{``aab''})$
      & 13,200 & 0.9 & 2.3 & 4,060 & 4,020 &  1,294 &   943 \\
      $\NumberWord(N,X,\text{``toto''})$
      & 17,600 & 0.9 & 2.7 & 4,446 & 4,435 &  1,149 &   663 \\
      $\cAutomaton(N,X,\mathcal{RST})$
      & 13,200 & 3.9 & 7.3 & 5,669 & 4,333 & 13,213 & 2,275 \\
      $\Constraint{inflexion}(N,X)$
      & 13,200 & 3.7 & 7.5 & 4,447 & 4,066 &  9,279 & 4,531 \\
    \end{tabular}
  \end{center}
  \caption{Comparison between $\cAutomaton$ and 
    $\Automaton$~\cite{Beldiceanu:automata} of SICStus Prolog}
  \label{tab:eval}
\end{table}

To demonstrate the power of our propagators, we have also tested them
on constraints whose counter-DFAs have \emph{binary} signature
constraints, so that our \emph{atleast} and \emph{atmost} regular
counting propagators may not achieve domain consistency, because they
were designed for unary signature constraints.  For example, the
$\Inflexion(N,X)$ constraint holds if there are $N$ inflexions (local
optima) in the integer sequence $X$; a cDFA is given
in~\cite{GC-catalogue}, with signature constraints using the
predicates $x_i ~\Set{<,=,>}~ x_{i+1}$ on the sliding window
$[x_i,x_{i+1}]$ of size $2$.  Our exact regular counting propagator
outperforms the built-in $\Automaton$~\cite{Beldiceanu:automata} of
SICStus Prolog, as shown in the last line of Table~\ref{tab:eval}.
Further, our instance generator has not yet constructed any
counter-example to domain consistency on \emph{atmost} regular
counting.

\section{Conclusion}
\label{section:conclusion}

This paper considers regular counting constraints over finite
sequences of variables, which are ubiquitous and very diverse in
sequencing and timetabling (e.g., restricting the number of monthly
working weekends, or two-day-periods where a nurse works during a
night followed by an afternoon).  It studies a class of counter
deterministic finite automata (cDFA) that provides much more concise
models for regular counting constraints than representations using
standard DFAs.

\subsection{Summary and Extensions}

Our main contribution is to show how to enforce domain consistency in
polynomial time for \emph{atmost} and \emph{atleast} regular
counting constraints, based on the frequent case of a counter-DFA with
only accepting states and a single counter that can be incremented by
transitions.  We have also proved that determining the feasibility of
\emph{exact} regular counting constraints is NP-hard.

It is possible to lift our restriction to counter automata where all
states are accepting, even though we are then technically outside the
realm of regular counting.  For instance, this would allow us to
constrain the number $N$ of occurrences of some pattern, recognised by
cDFA $\DFA_1$, in a sequence $X$ of variables, while $X$ is not
allowed to contain any occurrence of another pattern, recognised by
cDFA $\DFA_2$.  Rather than decomposing this constraint into the
conjunction of $\cAutomaton(N,X,\DFA_1)$ and
$\cAutomaton(0,X,\DFA_2)$, with poor propagation through the shared
variables, we can design a cDFA $\DFA_{12}$ that counts the number of
occurrences of the first pattern and \emph{fails} at any occurrence of
the second pattern (instead of \emph{counting} them) and post the
unique constraint $\cAutomaton(N,X,\DFA_{12})$, after using the
following recipe.  Add an accepting state, say $q$, and an alphabet
symbol, say $\$$, whose meaning is end-of-string.  Add transitions on
$\$$ from all existing accepting states to $q$, with counter increase
by zero.  Add transitions on $\$$ from all non-accepting states to
$q$, with counter increase by a suitably large number, such as
$\max(\Domain{N}) + 1$.  Make the non-accepting states accepting.
Append the symbol $\$$ to the sequence $X$ when posting the
constraint, so that the extended automaton never actually stops in a
state different from $q$, thereby making it irrelevant whether the
original states are accepting or not.

\subsection{Related Work}

Our regular counting constraints are related to the
$\CostRegular(X,\DFA,N,C)$ constraint~\cite{CostRegular}, an extension
of the $\Regular(X,\DFA)$ constraint~\cite{Pesant:regular}: a ground
instance holds if the sum of the variable-value assignment costs is
exactly $N$ after DFA $\DFA$ has accepted $X$, where the
two-dimensional cost matrix $C$, indexed by $\Sigma$ and $X$, gives
the costs of assigning each value of the alphabet $\Sigma$ of $\DFA$
to each variable of the sequence $X$.  Indeed, both the abstractions
and the underlying algorithms of regular counting constraints and
$\CostRegular$ are closely related.  However, we now argue that
regular counting constraints sometimes provide both a more natural
abstraction and some computational benefits, namely more propagation
and asymptotically less space, within the same asymptotic time.

At the \emph{conceptual level}, the regular counting and
$\CostRegular$ constraints differ in how costs are expressed.  In the
$\CostRegular$ constraint the costs are associated with variable-value
assignments, while in regular counting constraints the costs (seen as
counter increments) are associated with the transitions of the counter
automaton.  This is an important conceptual distinction, as counter
automata provide a more natural and compact abstraction for a variety
of constraints, where the focus is on counting rather than costing.
Footnote~1 of~\cite[page~318]{CostRegular} points out that the cost
matrix $C$ can be made three-dimensional, indexed also by the states
$Q$ of $\DFA$, but this is not discussed further
in~\cite{CostRegular}.  This allows the expression of costs on
transitions, and it seems that this has no impact on the time
complexity of their propagator.  This generalisation is implemented in
the \emph{Choco} solver~\cite{CHOCO}.  It is only with such a
three-dimensional cost matrix that it is possible for the modeller to
post a regular counting constraint by using the $\CostRegular$
constraint: first unroll the counter automaton for the length
$\Cardinality{X}$ into a directed acyclic weighted graph $G$ (as
described in~\cite{Pesant:regular}, and the counter increments become
the weights) and then post $\CostRegular(X,N,G)$; the \emph{Choco}
implementation~\cite[page~95]{CHOCO} of $\CostRegular$ features this
option.  The alternative is to read the three-dimensional cost matrix
$C$ off $\DFA$ only (since $X$ is not needed) and to post
$\CostRegular(X,\DFA',N,C)$, where DFA $\DFA'$ is counter-DFA $\DFA$
stripped of its counter increments.  Either way, this encoding is not
particularly convenient and it seems natural to adopt counter automata
as an abstraction.  Also note that the cost matrix $C$ has to be
computed for every different value of $\Cardinality{X}$ that occurs in
the problem model, while this is not the case with counter automata.
Essentially, regular counting is a specialisation of the generalised
$\CostRegular$ constraint (with a three-dimensional cost matrix),
obtained by projecting the generalised cost matrix onto two different
dimensions than in the original $\CostRegular$ constraint, namely $Q$
and $\Sigma$, and using it to extend the transition function of the
DFA to the signature $Q \times \Sigma \to Q \times \mathbb{N}$ and
calling the extended DFA a counter-DFA.

At the \emph{efficiency level}, regular counting constraints are
propagated using dynamic programming, like $\CostRegular$.  This is
not surprising.  The \emph{time} complexity is the same as for the
encoding using $\CostRegular$, namely $O(n \cdot \Cardinality{\Sigma}
\cdot \Cardinality{Q})$, where $n = \Cardinality{X}$, as the unrolling
of the cDFA takes the same time as the propagator itself.  It is
interesting however to note that the structure of regular counting
constraints enables a better \emph{space} complexity thanks to the
compactness of a counter automaton as the input datastructure, as well
as fundamentally different internal datastructures: we do \emph{not}
store the unrolled automaton.  Indeed, we have shown that regular
counting constraints have a space complexity of $\Theta(n \cdot
(\Cardinality{Q} + \Cardinality{\Sigma}))$, while the encoding by
$\CostRegular$ constraint takes $\Theta(n \cdot \Cardinality{\Sigma}
\cdot \Cardinality{Q})$ space, to store either the three-dimensional
cost matrix $C$ or the unrolled graph $G$ (\emph{Choco} allows both
ways of parametrising $\CostRegular$).  Note that adding a counter to
a DFA bears no asymptotic space overhead on the representation of the
DFA.

At the \emph{consistency level}, it is important to note that our
\emph{atmost} and \emph{atleast} regular counting propagators achieve
domain consistency on the counter variable $N$ in the same asymptotic
time as the $\CostRegular$ propagator~\cite{CostRegular,CHOCO}
achieves only bounds consistency on $N$.  As an aside, the claim
by~\cite{CostRegular,CHOCO} that their polynomial-time propagator
achieves domain consistency on the variables of the sequence $X$ is
invalidated by Theorem~\ref{thm:exactDC:NPhard}, hence this would only
be the case for \emph{atmost} and \emph{atleast} variants of
$\CostRegular$: for the cDFA $\mathcal{B}$ in
Figure~\ref{fig:noDCexact} and the constraint
$\cAutomaton(N,\Sequence{2,2,x,2,y},\mathcal{B})$, with $N \in
\Set{1,3}$ and $x,y \in \Set{1,2}$, their propagator misses the
inference of $y \neq 2$, and so does our propagator for exact regular
counting.  Our datastructures are more compact (see above), and yet
enable more propagation on $X$ for exact regular counting.

To summarise, although regular counting constraints and the
$\CostRegular$ constraint are closely related, we believe that the
results of this paper contribute both to our understanding of these
constraints and to the practice in the field.

The $\Constraint{SeqBin}$ constraint
\cite{Petit:seqBin,Katsirelos:seqBin} can be represented by a
regular counting constraint, but it would require non-unary
signature constraints.

\subsection{Future Work}

There are many issues that remain open.  They include the following
questions.  Can we implement our propagator to run in $O(n \cdot
\Cardinality{\Sigma})$ time?  Which counter-DFAs admit a propagator
achieving domain consistency for \emph{exact} regular counting?  Can
we generalise our domain-consistency result to non-unary signature
constraints?  Can we generalise our results to non-deterministic
counter automata (since all our notation depends on the transition
function $\TransitionFct$ being total)?

\subsection*{Acknowledgements}

The second and third authors are supported by grants 2011-6133 and
2012-4908 of the Swedish Research Council (VR).  NICTA is funded by
the Australian Government as represented by the Department of
Broadband, Communications and the Digital Economy and the Australian
Research Council through the ICT Centre of Excellence program.
We wish to thank Mats Carlsson for help with SICStus Prolog, Joseph
Scott for help with FAdo, and Arnaud Letord for help with performing
the test with $\CostRegular$ in \emph{Choco}.

\bibliographystyle{splncs03}
\bibliography{counterAuto}

\section*{Appendix: SICStus Prolog Implementation}

\lstinputlisting{propagator.pl}

\end{document}